\DeclareMathOperator{\diag}{diag}
\DeclareMathOperator{\MSE}{MSE}
\newtheorem{prop}{Proposition}
\title{Tikhonov Regularization for Long Short-Term Memory Networks}
\author{
  Andrei Turkin\\
  National Research University of Electronic Technology,\\
  Federal Research Center "Computer Science and Control" of Russian Academy of Sciences\\
  Moscow, Russia \\
  \texttt{aturkin@org.miet.ru}\\
}
\begin{document}

\maketitle

\begin{abstract}
It is a well-known fact that adding noise to the input data often improves network performance. While the dropout technique may be a cause of memory loss, when it is applied to recurrent connections, Tikhonov regularization, which can be regarded as the training with additive noise, avoids this issue naturally, though it implies regularizer derivation for different architectures. In case of feedforward neural networks this is straightforward, while for networks with recurrent connections and complicated layers it leads to some difficulties. In this paper, a Tikhonov regularizer is derived for Long-Short Term Memory (LSTM) networks. Although it is independent of time for simplicity, it considers interaction between weights of the LSTM unit, which in theory makes it possible to regularize the unit with complicated dependences by using only one parameter that measures the input data perturbation. The regularizer that is proposed in this paper has three parameters: one to control the regularization process, and other two to maintain computation stability while the network is being trained. The theory developed in this paper can be applied to get such regularizers for different recurrent neural networks with Hadamard products and Lipschitz continuous functions.
\end{abstract}

\section{Introduction}
\label{sec:intro}
A recurrent neural network with the many-to-one architecture can be viewed as a mapping $M: \mathbb{R}^X \times T \rightarrow \mathbb{R}^Y$ with a set of parameters $\theta$, where $T \subset \mathbb{N}$ is the set of indexes each of which is regarded as time when some $x(t) \in \mathbb{R}^X$, $t \in T$ was taken. In this formulation, the input data are the following set of inputs: $I(x(t),l)=\{ x(s) \in \mathbb{R}^{X}, s\in \overline{t-l,t} \}$, the output data are $y(t) \in \mathbb{R}^Y$. 

One way to construct the mapping is to use LSTM units. The concept was introduced in \cite{Hochreiter1997LongMemory} as a remedy to vanishing gradient problem, refined in \cite{Gers2000a} and in later papers (see, for instance, \cite{Graves2005FramewiseArchitectures, Jozefowicz2015AnArchitectures}). The LSTM unit has three gates: input, output, and forget ones that are used to control the data flow through the unit. Its input and the gates have parameters that must be trained with some regularization, which often improves network performance and prevents overfitting.
Despite the tremendous performance gain for many applications and abundance of techniques to regularize networks, including dropout \cite{Hinton2012, Srivastava2014}, weight decay ($L_2$ regularization), the standard regularization approach, Recurrent Neural Networks in general -- and LSTMs in particular -- may suffer from overfitting. The usage the techniques for feedforward neural networks is straightforward, though their application to RNNs leads to some difficulties. First, when dropout is applied to recurrent connections, it may cause the memory loss problem that the authors of \cite{Gal2016, Semeniuta2016, Zaremba2014} tried to avoid. Second, though $L_2$ regularization can be used, it is not obvious how it must be applied: whether one regularization parameter should be used or several ones to regularize differently the parts of the unit. Note the latter case is computationally intense than the former one, which leads to slower training, since it is necessary to get the optimal values.

It is feasible to address these problems by derivation a regularizer for LSTM unit that is based on the Tikhonov regularization technique. In \cite{Bishop1995TrainingRegularization} it was shown that adding noise to initial data is equivalent to Tikhonov regularization. Almost at the time the authors of \cite{Wu1996ANetworks} showed a possibility to apply the concept to recurrent neural networks, as the regularizer can be obtained by calculating the upper bound of the squared output disturbance $\|M(I(\hat{x}(t),l);\theta)-M(I(x(t),l);\theta)\|^2$, where $\hat{x}(t)=x(t)+\epsilon$ and $\epsilon$ is the independent random noise with zero mean and variance $\sigma_\epsilon$.

In this paper, the upper bound is calculated to get a regularizer for LSTM networks in case of solving a regression task with the sum-of-squares objective, though the regularizer can be derived for any other loss.

The paper is organized as follows. Section \ref{sec:netreg} describes the network architecture and the regularizer, which is derived by assessing the upper bound of the output perturbation. Section \ref{sec:LSTMreg} provides the theoretical justification for the form of the LSTM regularizer. Section \ref{sec:netlearning} describes the learning procedure with the regularizer derived previously and the relaxed optimization problem. Section \ref{sec:conclus} concludes the paper.

\section{The Output Perturbation}
\label{sec:netreg}
It is assumed that a layered network topology with the LSTM units is used. Since it is not important for further analysis which output is used, the standard dense layer with the sigmoid function is considered:
\begin{equation}
\label{eq:outputlayer}
M(I(x(t),l);\theta)=y(t)=\sigma(W_{hy}h(t))
\end{equation}
The objective is to assess the upper bound of the output perturbation $\sigma_y^2(t)=\| \hat{y}(t)-y(t) \|^2$, which is the result of the input perturbation $\| \hat{x}(t) - x(t) \|^2$. Thus, it is possible to write the following equation for output perturbation:
\begin{equation}
\label{eq:outputperturb}
\sigma_y^2(t) = \| \sigma(W_{hy}\hat{h}(t))-\sigma(W_{hy}h(t))\|^2 
\end{equation}
Obviously, the upper bound for \eqref{eq:outputperturb} can be found by applying the mean value theorem so the result can be written as follows:
\begin{equation}
\label{eq:OututPerturbUB}
\sigma_y^2(t) \leq  \|\diag \left( \sigma'(\xi_y) \right) \|^2 \| W_{hy}\|^2 \| \hat{h}(t) - h(t)\|^2
\end{equation}
where $\xi_y = [\xi_y^1,...,\xi_y^n]^T$ denotes a point, which is somewhere in between $W_{hy}\hat{h}(t)$ and $W_{hy}h(t)$.

Considering that $\sigma'(x)=(1-\sigma(x))\sigma(x)$ and $\alpha = \max_x(\sigma'(x))=1/4$ for any point $x$, it is possible to write the following equation.
\begin{equation}
\sigma_y^2(t) \leq \alpha^2 \| W_{hy}\|^2 \|\hat{h}(t) - h(t)\|^2.
\end{equation}
The output perturbation depends on the LSTM layer perturbation $\sigma_h^2=\|\hat{h}(t) - h(t)\|^2$ and on the dense layer parameters only. Therefore, the upper bound of it must be assessed to get the regularizer for the network.

\section{The LSTM unit output perturbation}
\label{sec:LSTMreg}
The LSTM unit \cite{Gers2000a} can be described by using the following equations:
\begin{align}
h(t)&=\tanh(s(t)) \odot o(t)=co(t) \odot o(t), \label{eq:h}\\
co(t)&=\tanh(s(t)), \label{eq:co}\\
o(t)&=\sigma (W_{ox} x(t)+W_{oh} h(t-1)+b_{o})=\sigma(net^{o}(t)), \label{eq:o}\\
s(t)&=s(t-1) \odot f(t) + i(t) \odot ci(t), \label{eq:s}\\
f(t)&=\sigma (W_{fx} x(t)+W_{fh} h(t-1)+b_{f})=\sigma(net^f(t)), \label{eq:f}\\
i(t)&=\sigma (W_{ix} x(t)+W_{ih} h(t-1)+b_{i})=\sigma(net^i(t)), \label{eq:i}\\
ci(t)&=\tanh (W_{cix} x(t)+W_{cih} h(t-1)+b_{ci})=\tanh (net^{ci}(t)), \label{eq:ci}
\end{align}
where $W_{*}$ are the weight matrices and $b_{*}$ are the biases, $h(s) \in \mathbb{R}^{N_h}$,  $s=1,..,t$, $net^{u}=W_{ux} x(t)+W_{uh} h(t-1)+b_{u}$, $u \in \{i,o,f,ci\}$.

Considering the equations \eqref{eq:outputlayer} and \eqref{eq:h}-\eqref{eq:ci}, it can be concluded that the objective model $M(I(x(t),l);\theta)$ has the following set of parameters: 
\begin{equation}
\theta=\{W_{hy}\} \cup \{W_{uv}:u \in \{i,o,f,ci\},v \in \{x,h\}\} \cup \{b_{u}:u \in \{i,o,f,ci\}\}.
\end{equation}

\subsection{The upper bound of the recurrent connection perturbation}
Before finding the upper bound of $\sigma^2_h$, it is necessary to prove the following 
\begin{prop}
\label{prop:difference}
Suppose that $U$ is an open set in $\mathbb{R}^n$, $[a_i, b_i],[\hat{a}_i,\hat{b}_i]\in U, i=1,...,n$ such that $U$ contains the line segment $L$ from $[a_i, b_i]$ to $[\hat{a}_i,\hat{b}_i]$ and $f_1$ and $f_2$ are differentiable real-valued function on $U$, then the upper bound of the difference of Hadamard products can be found as follows
\begin{equation}
\label{eq:normhdiff}
\| f_1(\hat{a}) \odot f_2(\hat{b}) - f_1(a) \odot f_2(b) \|^2 \leq \| \diag (\|\nabla F(c)\|^2)\|^2 (\sigma_a^2 + \sigma_b^2),
\end{equation}
where $F(x,y)=f_1(x)f_2(y)$ for some points $x,y\in\mathbb{R}$, $c\in L$, and perturbations $\sigma_a = \|\hat{a}_i - a_i \|$ and $\sigma_b = \|\hat{b}_i - b_i \|$, $c=[c_1,...,c_n]^T$, $c_i=(1-u)[a_i, b_i]^T + u[\hat{a}_i, \hat{b}_i]^T$.
\end{prop}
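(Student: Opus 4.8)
The plan is to reduce the vector inequality \eqref{eq:normhdiff} to a coordinatewise statement about the single bivariate function $F(x,y)=f_1(x)f_2(y)$, to which the mean value theorem applies directly. Since $f_1$ and $f_2$ act entrywise, the $i$-th component of $f_1(\hat a)\odot f_2(\hat b)-f_1(a)\odot f_2(b)$ is exactly $f_1(\hat a_i)f_2(\hat b_i)-f_1(a_i)f_2(b_i)=F(\hat a_i,\hat b_i)-F(a_i,b_i)$, so
\[
\| f_1(\hat{a}) \odot f_2(\hat{b}) - f_1(a) \odot f_2(b) \|^2 \;=\; \sum_{i=1}^{n}\bigl(F(\hat a_i,\hat b_i)-F(a_i,b_i)\bigr)^2 .
\]
This turns the claim into $n$ copies of a one-dimensional estimate for the increment of $F$ along the segment $L_i$ joining $(a_i,b_i)$ to $(\hat a_i,\hat b_i)$.

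First I would apply the mean value theorem to $F$ on $L_i$ — which lies in $U$ by hypothesis, and on which $F$ is differentiable — to obtain a point $c_i=(1-u)[a_i,b_i]^T+u[\hat a_i,\hat b_i]^T\in L_i$ with
\[
F(\hat a_i,\hat b_i)-F(a_i,b_i)\;=\;\nabla F(c_i)\cdot\bigl((\hat a_i,\hat b_i)-(a_i,b_i)\bigr).
\]
Squaring and invoking the Cauchy--Schwarz inequality gives $\bigl(F(\hat a_i,\hat b_i)-F(a_i,b_i)\bigr)^2\le\|\nabla F(c_i)\|^2\bigl((\hat a_i-a_i)^2+(\hat b_i-b_i)^2\bigr)$. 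Summing over $i$ and setting $D=\diag\bigl(\|\nabla F(c_1)\|^2,\dots,\|\nabla F(c_n)\|^2\bigr)$, the right-hand side becomes $(\hat a-a)^TD(\hat a-a)+(\hat b-b)^TD(\hat b-b)$, and bounding each quadratic form by $\|D\|$ times $\sigma_a^2$, resp. $\sigma_b^2$, yields $\|D\|(\sigma_a^2+\sigma_b^2)$. Collecting the $c_i$ into $c=[c_1,\dots,c_n]^T$ and reading the diagonal-matrix factor in \eqref{eq:normhdiff} as this spectral norm completes the argument.

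The step that I expect to need the most care is bookkeeping rather than analysis: the mean value theorem is applied separately in each coordinate, so it furnishes in principle a different parameter $u_i$ per index $i$, and the single $u$ written in the statement should be understood in that coordinatewise sense (or, if a uniform $u$ is really wanted, obtained by a continuity/intermediate-value argument along the segment). One must also pin down the convention that $\|\diag(\cdot)\|$ denotes the operator norm, so that the estimate $v^TDv\le\|D\|\,\|v\|^2$ for a diagonal positive-semidefinite $D$ is legitimate and the final factor matches the form of \eqref{eq:normhdiff}. Once the entrywise reduction and these conventions are fixed, everything else is a direct combination of the mean value theorem with Cauchy--Schwarz.
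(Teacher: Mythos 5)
Your proof is correct and follows essentially the same route as the paper's: componentwise application of the mean value theorem to $F(x,y)=f_1(x)f_2(y)$ along each segment, Cauchy--Schwarz on each increment, then summation and a bound of the resulting diagonal quadratic form by its operator norm. Your side remarks --- that the MVT parameter $u$ is really a separate $u_i$ per coordinate, and that the diagonal factor in \eqref{eq:normhdiff} must be read as a spectral norm (the exponent as printed appears to carry an extra square) --- flag genuine loose ends that the paper's own proof also glosses over, but they do not change the argument.
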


\begin{proof}
Applying the mean value theorem to an $i$th component of the vector from the left side of the Equation \eqref{eq:normhdiff}, it is possible to write that
\begin{equation*}
\begin{split}
f_1(\hat{a}_i) f_2(\hat{b}_i) - f_1(a_i) f_2(b_i)=\nabla F(c_i)\cdot ([\hat{a}_i, \hat{b}_i]^T - [a_i, b_i]^T).
\end{split}
\end{equation*}

Based on this fact, the norm of the difference of Hadamard products for two functions $f_1$ and $f_2$ can be rewritten as follows: 
\begin{equation}
\| f_1(\hat{a}) \odot f_2(\hat{b}) - f_1(a) \odot f_2(b) \|^2 = \sum_{i=1}^n (f_1(\hat{a}_i) f_2(\hat{b}_i) - f_1(a_i) f_2(b_i))^2
\end{equation}
Therefore,
\begin{equation}
\| f_1(\hat{a}) \odot f_2(\hat{b}) - f_1(a) \odot f_2(b) \|^2 = \sum_{i=1}^n (\nabla F(c_i)\cdot([\hat{a}_i, \hat{b}_i]^T - [a_i, b_i]^T))^2,
\end{equation}

Applying the Cauchy inequality, one can get
\begin{equation}
\begin{split}
&\| f_1(\hat{a}) \odot f_2(\hat{b}) - f_1(a) \odot f_2(b) \|^2 \leq \sum_{i=1}^n \|\nabla F(c_i)\|^2\|[\hat{a}_i, \hat{b}_i]^T - [a_i, b_i]^T\|^2=\\
&=\sum_{i=1}^n \|\nabla F(c_i)\|^2(\hat{a}_i - a_i)^2+\|\nabla F(c_i)\|^2(\hat{b}_i - b_i)^2 = \\
&=\|\diag(\|\nabla F(c)\|)(\hat{a} - a)\|^2+\|\diag(\|\nabla F(c)\|)(\hat{b} - b)\|^2,
\end{split}
\end{equation}

Applying the Cauchy inequality again to the previously obtained equation, it is possible to get the desired result.
\end{proof}

Considering the equations \eqref{eq:h}, \eqref{eq:co}, \eqref{eq:o}, and the Proposition \eqref{prop:difference}, it is possible to write the equation for $\sigma_h^2(t)=\|\hat{h}(t) - h(t) \|^2$ as follows
\begin{equation}
\label{eq:h_upperbound}
\sigma_h^2(t) \leq \beta^2 (\sigma^2_s(t) + \sigma^2_{net^o}(t)),
\end{equation}
where $\beta$ is assessed as
\begin{equation}
\beta = \max_{\xi}(\| \diag (\|\nabla G(\xi)\|)\|^2)=17/16 \footnote{Since $G'_{\xi_1}(\xi)=\tanh'(\xi_1)\sigma(\xi_2)$ and $G'_{\xi_2}(\xi)=\tanh(\xi_1)\sigma'(\xi_2)$.}, 
\end{equation}
and $G = \tanh(\xi_1)\sigma(\xi_2)$, $\xi=[\xi_, \xi_2]^T \in L$, $L$ is the line segment from $[s(t),net^o(t)]^T$ to $[\hat{s}(t), \hat{net}^o(t)]^T$.

Considering the equation \eqref{eq:h_upperbound}, it is possible to state that in order to minimize $\sigma_h^2(t)$, it is necessary to assess the following two perturbations:
\begin{equation}
\label{eq:outdisturb}
\sigma_{net^o}^2(t) = \| \hat{net}^o(t)-net^o(t) \|^2
\end{equation}
and
\begin{equation}
\label{eq:memdisturb}
\sigma_s^2(t) = \| \hat{s}(t)-s(t) \|^2.
\end{equation}

\subsection{The upper bound of the output gate perturbation}

The upper bound of the output gate perturbation can be assessed by using the following

\begin{prop}
\label{prop:OutNetDist}
It holds that
\begin{equation*} 
\sigma_{net^o}(t) \leq \frac{\beta \|W_{oh}\| \sigma_s(t) + \|W_{ox}\| \sigma_x(t)}{\exp\bigl(1-\beta \|W_{oh}\|\bigr)},
\end{equation*}
where $\beta = \| \diag (\nabla G(\xi)) \|$, $\xi \in L$, $L$ is the line segment from $[s(t),net^o(t)]^T$ to $[\hat{s}(t), \hat{net}^o(t)]^T$.
\end{prop}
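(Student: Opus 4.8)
The plan is to convert the defining equation of $net^{o}(t)$ into an inequality relating $\sigma_{net^o}(t)$ to $\sigma_h(t-1)$, eliminate $\sigma_h(t-1)$ with the recurrent-connection bound already established, and then solve the resulting self-referential inequality for $\sigma_{net^o}(t)$.

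First I would subtract the perturbed and unperturbed versions of \eqref{eq:o}; the bias $b_{o}$ cancels, leaving $\hat{net}^{o}(t)-net^{o}(t)=W_{ox}\bigl(\hat{x}(t)-x(t)\bigr)+W_{oh}\bigl(\hat{h}(t-1)-h(t-1)\bigr)$. Taking the Euclidean norm and applying the triangle inequality together with submultiplicativity of the operator norm gives $\sigma_{net^o}(t)\le\|W_{ox}\|\,\sigma_x(t)+\|W_{oh}\|\,\sigma_h(t-1)$. Then I would bound $\sigma_h(t-1)$ using \eqref{eq:h_upperbound}, the estimate obtained from Proposition \ref{prop:difference}, namely $\sigma_h^{2}(\cdot)\le\beta^{2}\bigl(\sigma_s^{2}(\cdot)+\sigma_{net^o}^{2}(\cdot)\bigr)$; taking a square root and using $\sqrt{u^{2}+v^{2}}\le u+v$ for $u,v\ge 0$ yields $\sigma_h(t-1)\le\beta\bigl(\sigma_s(t-1)+\sigma_{net^o}(t-1)\bigr)$. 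Invoking the paper's standing simplification that the perturbation magnitudes are treated as time-independent, I replace the arguments $t-1$ by $t$, obtaining the self-referential inequality
\begin{equation*}
\sigma_{net^o}(t)\le\|W_{ox}\|\,\sigma_x(t)+\beta\|W_{oh}\|\,\sigma_s(t)+\beta\|W_{oh}\|\,\sigma_{net^o}(t).
\end{equation*}

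Provided $\beta\|W_{oh}\|<1$, moving the last term to the left and dividing gives $\sigma_{net^o}(t)\le\bigl(\beta\|W_{oh}\|\,\sigma_s(t)+\|W_{ox}\|\,\sigma_x(t)\bigr)/\bigl(1-\beta\|W_{oh}\|\bigr)$, which already matches the asserted numerator. The final step is to replace the denominator $1-\beta\|W_{oh}\|$ by $\exp\bigl(1-\beta\|W_{oh}\|\bigr)$, and this is the step I expect to be the real work: one needs an elementary inequality for the exponential that turns the denominator into a strictly positive quantity not collapsing as $\beta\|W_{oh}\|\to 1$, which is presumably also where the computation-stability role advertised in the abstract enters and where the hypothesis $\beta\|W_{oh}\|<1$ required for the division is either imposed or circumvented. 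Verifying that this substitution is genuinely valid as an inequality, and not merely a convenient stabilised surrogate, is the main obstacle; everything before it is a routine combination of the mean value theorem, the triangle inequality, submultiplicativity of the operator norm, and Proposition \ref{prop:difference}.
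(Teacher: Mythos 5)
Your route is genuinely different from the paper's, and it stalls exactly at the step you flagged. The direct algebraic argument (subtract the two versions of \eqref{eq:o}, triangle inequality, bound $\sigma_h$ via Proposition \ref{prop:difference}, treat the perturbations as time-stationary, and solve the self-referential inequality) is sound and yields $\sigma_{net^o}(t)\le\bigl(\beta\|W_{oh}\|\sigma_s(t)+\|W_{ox}\|\sigma_x(t)\bigr)/\bigl(1-\beta\|W_{oh}\|\bigr)$ under $\beta\|W_{oh}\|<1$. But the final substitution of $\exp\bigl(1-\beta\|W_{oh}\|\bigr)$ for $1-\beta\|W_{oh}\|$ in the denominator is not a valid inequality: since $e^{v}>v$ for every real $v$, we have $\exp\bigl(1-\beta\|W_{oh}\|\bigr)>1-\beta\|W_{oh}\|$, so the exponential-denominator bound is \emph{strictly smaller} than the one you derived. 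You cannot pass from the weaker bound to the stronger one, and no elementary exponential inequality rescues this; the gap is real, not cosmetic.

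The paper obtains the exponential by a different mechanism entirely. Following \cite{Wu1996ANetworks}, it embeds $net^o(t)$ in a continuous-time relaxation dynamics $\tau\,\frac{dnet^o(t)}{dt}=W_{oh}h(t)-net^o(t)+W_{ox}x(t')+b_o$, differentiates $\sigma_{net^o}^2(t)$, bounds the cross terms (using Proposition \ref{prop:difference} for the $W_{oh}\delta_h$ term) to reach the differential inequality $\frac{d\sigma_{net^o}(t)}{dt}\le a\,\sigma_{net^o}(t)+d$ with $a=\frac{1}{\tau}(\beta\|W_{oh}\|-1)$ and $d=\frac{1}{\tau}(\beta\|W_{oh}\|\sigma_s+\|W_{ox}\|\sigma_x)$, and then applies the Gr\"onwall inequality on $[t-\tau,t]$; the factor $\exp(\tau a)=\exp(\beta\|W_{oh}\|-1)$ in the denominator falls out of Gr\"onwall directly. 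So the exponential is an artifact of the ODE model, not something recoverable from the discrete fixed-point argument you set up. If you want to complete a proof along your lines, the honest conclusion is the $1/(1-\beta\|W_{oh}\|)$ bound (which is a correct but different statement); to prove the proposition as stated you need to adopt the dynamical formulation and Gr\"onwall, as the paper does.
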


\begin{proof}
Let $\delta_x(t)$ and $\delta_h(t)$ be the following differences: $\delta_{x}(t)=\hat{x}(t) - x(t)$, $\delta_h(t) = \hat{h}(t) - h(t)$. Then applying the equation \eqref{eq:o} to \eqref{eq:outdisturb}, we can get the following result:
\begin{equation}
\begin{split}
\sigma_{net^o}^2(t) &=\| W_{ox} \delta_x + W_{oh} \delta_h(t-1) \|^2=\\
&\| W_{ox} \delta_x(t) + W_{oh} (\hat{co}(t-1) \odot\hat{o}(t-1) - co(t-1) \odot o(t))) \|^2=\\
&\| W_{ox} \delta_x(t) + W_{oh} (\hat{co}(t-1)\odot\sigma(\hat{net}^o(t-1))-co(t-1) \odot \sigma (net^o(t-1))) \|^2
\end{split}
\end{equation}

Considering the equation \eqref{eq:o} and the following fact
\begin{equation}
\begin{split}
\frac{dnet^o(t)}{dt}=\lim_{\tau \rightarrow 0}\frac{net^o(t+\tau)-net^o(t)}{\tau};
\end{split}
\end{equation}

one can write the following dynamic function for some $\tau \rightarrow 0$ \cite{Wu1996ANetworks}:
\begin{equation}
\frac{net^o(t)}{dt}=\frac{W_{oh}h(t)-net^o(t)+W_{ox}x(t')+b_o}{\tau},
\end{equation}
where $t' = t+\tau$.

If $\sigma_{net^o}^2(t)=\| \hat{net}^o(t) - net^o(t) \|^2$ then its derivative can be estimated as follows
\begin{equation}
\label{eq:sigma2net'}
\frac{\sigma_{net^o}^2(t)}{dt}=2\left[ \hat{net}(t) - net(t) \right]^T \cdot \left[ \frac{\hat{net}^o(t)}{dt} - \frac{net^o(t)}{dt} \right]
\end{equation}

Therefore, considering that $\delta_{net^o}(t) = \hat{net}^o(t)-net^o(t)$ and $\tau > 0$, it is possible to get the following equation for its derivative: 
\begin{equation}
\label{eq:deltanet'}
\begin{split}
&\frac{\delta_{net^o}(t)}{dt} = \lim_{\tau \rightarrow 0} \frac{ W_{oh}\delta_h(t) - \delta_{net^o}(t) + W_{ox}\delta_x(t')}{\tau} ,
\end{split}
\end{equation}
where $\delta_{net^o}(t') = \hat{net}(t') - net(t')$. 

Thus, applying the equation \eqref{eq:deltanet'} to \eqref{eq:sigma2net'}, the latter one can be rewritten as
\begin{equation}
\begin{split}
\frac{d\sigma_{net^o}^2(t)}{dt}=\lim_{\tau \rightarrow 0}\frac{2(A+B-C)}{\tau},
\end{split}
\end{equation}
where $A=[ \delta_{net^o}(t) ]^T [W_{oh}\delta_h(t)]$, $B= [ \delta_{net^o}(t) ]^T [W_{ox}\delta_{x}(t')]$, $C=\sigma_{net^o}^2(t)$.

The upper bound of \eqref{eq:sigma2net'} can be found in the following three steps. First, consider the equation \eqref{eq:h},  the proposition \ref{prop:difference}, and the H\"{o}lder's Inequality \footnote{Since $(\sqrt{a_1}+\sqrt{a_2})^2=a_1^2+a^2+2\sqrt{a_1a_2}\geq a_1+a_2$, then $\sqrt{a_1+a_2}\leq \sqrt{a_1}+\sqrt{a_2}$}, then the upper bound of $A$ can be obtained as

\begin{equation}
\|A\| \leq \beta \|W_{oh}\|\sigma_{net^o}(t)(\sigma_{s}(t) + \sigma_{net^o}(t) ). 
\end{equation}

Second, the upper bound of $B$ is
\begin{equation}
\|B \| \leq \|W_{ox}\| \sigma_{net^o}(t) \sigma_x(t')
\end{equation}

Thus, after some simplifications the upper bound of \eqref{eq:sigma2net'} can be written as
\begin{equation}
\frac{d\sigma_{net^o}^2(t)}{dt} \leq 2\sigma_{net^o}(t) (a \sigma_{net^o}(t)+ b \sigma_s(t) + \sigma_x(t')),
\end{equation}
where $a=\frac{1}{\tau} \left( \|W_{oh}\| \beta - 1 \right)$, $b=\frac{1}{\tau} \|W_{oh}\| \beta$, and $c=\frac{1}{\tau} \|W_{ox}\| $ for some $\tau \rightarrow 0$.

Therefore,
\begin{equation} 
\label{eq:netode1}
\frac{d\sigma_{net^o}(t)}{dt} \leq a \sigma_{net^o}(t) + b\sigma_s(t) +c\sigma_x(t')
\end{equation}

Let us assume that the input perturbation $\sigma_x(t)$ and the memory perturbation $\sigma_s(t)$ are either constants or change more slowly than $\sigma_{net^o}(t)$. Thus, the equation \eqref{eq:netode1} can be rewritten as follows
\begin{equation} \label{eq:netode2}
\frac{d\sigma_{net^o}(t)}{dt} \leq a  \sigma_{net^o}(t) + d,
\end{equation}
where $d=b\sigma_s(t) + c\sigma_x(t)$

Applying the Gr{\"o}nwall inequality (see, for instance, \citep{Pachpatte1998}) to the equation \eqref{eq:netode2} for $t \in [t-\tau, t]$, one can get the upper bound of $\sigma_{net^o}(t)$:
\begin{equation} \label{eq:netsol}
\sigma_{net^o}(t) \leq \tau(b\sigma_s(t) +c\sigma_x(t))\exp(\tau a)
\end{equation}

Substituting the previously defined constants, we can end up with the upper bound of $\sigma_{net^o}(t)$:
\begin{equation} 
\label{eq:netsoltau1}
\sigma_{net^o}(t) \leq \frac{\beta \|W_{oh}\| \sigma_s(t) + \|W_{ox}\| \sigma_x(t)}{\exp\bigl(1-\beta \|W_{oh}\|\bigr)},
\end{equation}
which proves the proposition.
\end{proof}

It should be noted that for the purpose of computational stability it is assumed that $\beta \|W_{oh}\| \leq 1$.

\subsection{The upper bound of the memory perturbation}
In order to minimize the output gate perturbation, it is necessary to take into account the memory perturbation \eqref{eq:memdisturb}. This perturbation is the result of applying the following functions to the input data of the unit: the forget gate function ($f$), the input gate function ($i$), and the input of the unit function ($ci$). Thus, it is possible to write the following equation for the memory perturbation:
\begin{align}
&\sigma_s^2(t)=\| \hat{s}(t) - s(t) \|^2\\
&\hat{s}(t) = s(t-1)\odot \hat{f}(t) + \hat{i}(t)\odot \hat{ci}(t)\\
&s(t) = s(t-1)\odot f(t) + i(t)\odot ci(t)
\end{align}
Therefore, the memory perturbation can be assessed by finding the upper bound of the following norm
\begin{equation}
\label{eq:sigma2s}
\sigma_s^2(t)= \| s(t-1) \odot \delta_f(t) + \delta_{i,ci}(t) \|^2,
\end{equation}
where $\delta_{i,ci}(t) = \hat{i}(t)\odot \hat{ci}(t) - i(t)\odot ci(t)$.

Considering the equation \eqref{eq:sigma2s}, it is possible to write the following
\begin{prop}
\label{prop:MemDist}
It holds that
\begin{equation}
\sigma_s(t) \leq \gamma_{x}\sigma_x(t) + \gamma_{h}\sigma_h(t),
\end{equation}
where $\overline{\gamma}_{x}=\alpha \|W_{fx}\| + \beta( \|W_{ix}\| + \|W_{cix}\|)$, $\overline{\gamma}_{h}=\alpha \|W_{fh}\| + \beta(\|W_{ih}\| + \|W_{cih}\|)$.
\end{prop}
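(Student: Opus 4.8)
The plan is to bound $\sigma_s(t)$ by applying the triangle inequality to the right‑hand side of \eqref{eq:sigma2s},
\begin{equation*}
\sigma_s(t) = \| s(t-1)\odot\delta_f(t) + \delta_{i,ci}(t) \| \le \| s(t-1)\odot\delta_f(t) \| + \| \delta_{i,ci}(t) \|,
\end{equation*}
and then estimating the forget‑gate contribution and the input‑term contribution separately, finally collecting the coefficients of $\sigma_x$ and $\sigma_h$.

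For the forget‑gate term I would apply the mean value theorem componentwise to $\delta_f(t) = \sigma(\hat{net}^f(t)) - \sigma(net^f(t)) = \diag(\sigma'(\xi_f))\,\delta_{net^f}(t)$ for an intermediate point $\xi_f$, and use $\sigma'(\cdot)\le\alpha = 1/4$ together with the fact that $s(t-1)$ is held fixed (and bounded in magnitude by one componentwise) to get $\| s(t-1)\odot\delta_f(t)\| \le \alpha\,\sigma_{net^f}(t)$. Since $net^f(t) = W_{fx}x(t) + W_{fh}h(t-1) + b_f$, one has $\delta_{net^f}(t) = W_{fx}\delta_x(t) + W_{fh}\delta_h(t-1)$, so sub‑multiplicativity of the spectral norm gives $\sigma_{net^f}(t) \le \|W_{fx}\|\,\sigma_x(t) + \|W_{fh}\|\,\sigma_h(t-1)$, hence this term is at most $\alpha\|W_{fx}\|\,\sigma_x(t) + \alpha\|W_{fh}\|\,\sigma_h(t-1)$.

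For the input term $\delta_{i,ci}(t) = \hat i(t)\odot\hat{ci}(t) - i(t)\odot ci(t)$ with $i(t) = \sigma(net^i(t))$ and $ci(t) = \tanh(net^{ci}(t))$, this is precisely a difference of Hadamard products of the type covered by Proposition \ref{prop:difference} with $f_1 = \sigma$, $f_2 = \tanh$; the corresponding $F(x,y)=\sigma(x)\tanh(y)$ has the same gradient bound $\beta = 17/16$ as $G$, so $\| \delta_{i,ci}(t)\| \le \beta\sqrt{\sigma_{net^i}^2(t) + \sigma_{net^{ci}}^2(t)}$. Using the elementary inequality $\sqrt{p+q}\le\sqrt{p}+\sqrt{q}$ recalled earlier in the text, and expanding $\sigma_{net^i}(t)$ and $\sigma_{net^{ci}}(t)$ in terms of $W_{ix},W_{ih},W_{cix},W_{cih}$ exactly as above, gives $\| \delta_{i,ci}(t)\| \le \beta(\|W_{ix}\|+\|W_{cix}\|)\,\sigma_x(t) + \beta(\|W_{ih}\|+\|W_{cih}\|)\,\sigma_h(t-1)$. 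Adding the two estimates produces exactly the constants $\overline{\gamma}_{x}$ and $\overline{\gamma}_{h}$, and invoking the paper's time‑independence convention to replace $\sigma_h(t-1)$ by $\sigma_h(t)$ yields the claim.

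The main obstacle is the $s(t-1)$ factor multiplying $\delta_f(t)$: unlike the gate outputs, the cell state is not intrinsically bounded, so passing from $\| s(t-1)\odot\delta_f(t)\|$ to $\alpha\,\sigma_{net^f}(t)$ genuinely needs a boundedness assumption on $s(t-1)$ (or one must fold $\|s(t-1)\|_\infty$ into the constant). A secondary bookkeeping issue is the index mismatch — the perturbations that naturally arise are $\sigma_x(t)$ and $\sigma_h(t-1)$ (and, strictly, $\sigma_s(t-1)$ inside the forget term before it is frozen) — which the statement absorbs into its "independent of time for simplicity" convention.
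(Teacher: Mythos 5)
Your proof is correct in its essentials but takes a genuinely different route from the paper. You bound $\sigma_s(t)$ statically: one triangle inequality on $\| s(t-1)\odot\delta_f(t) + \delta_{i,ci}(t)\|$, then the mean value theorem with $\sigma'\le\alpha$ for the forget-gate term and Proposition \ref{prop:difference} (with the $\sqrt{p+q}\le\sqrt{p}+\sqrt{q}$ trick) for the $i\odot ci$ term. The paper instead runs the same continuous-time machinery as in Proposition \ref{prop:OutNetDist}: it forms $\frac{d\sigma_s^2(t)}{dt}$ from a finite-difference dynamic equation, uses the claimed bound $s(t)\in(-1,1)$ together with Lemma 2 of Pachpatte to drop the $s(t)$ factor, plugs in exactly the same two local estimates you use (MVT with $\gamma_f=\alpha$, Proposition \ref{prop:difference} with $\gamma_{i,ci}=\beta$), and then integrates the resulting differential inequality $\frac{d\sigma_s}{dt}\le\frac{1}{\tau}(\gamma_s-\sigma_s)$ over $[t-\tau,t]$ under the assumption that $\sigma_x$ and $\sigma_h$ vary slowly. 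Because of the damping term $-\sigma_s^2$, the integrated bound collapses to $\sigma_s\le\gamma_s$ with no exponential prefactor, i.e.\ exactly what your one-line triangle inequality gives; so your argument buys the same constants with strictly fewer assumptions (no slowly-varying hypothesis, no Gr\"onwall-type lemma), at the cost of losing the uniform ODE framing that the paper uses across Propositions \ref{prop:OutNetDist} and \ref{prop:MemDist}. The two caveats you flag are real and are not resolved by the paper either: the paper simply asserts $s(t)\in(-1;1)$, which does not follow from the update $s(t)=s(t-1)\odot f(t)+i(t)\odot ci(t)$ (the cell state can accumulate beyond unit magnitude), so both proofs implicitly need $\|s(t-1)\|_\infty\le 1$ or must absorb that norm into $\overline{\gamma}_x,\overline{\gamma}_h$; and the $\sigma_h(t-1)$ versus $\sigma_h(t)$ index shift is handled in the paper only by the same time-independence convention you invoke.
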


\begin{proof}
Like in the proposition \ref{prop:OutNetDist}, it is possible to write the following equation for the derivative for $\sigma^2_s(t)$:
\begin{equation}
\label{eq:dsigma2dt}
\frac{d\sigma^2_s(t)}{dt}=2\left[\hat{s}(t)-s(t)\right]^T\left[\frac{d\hat{s}(t)}{dt}-\frac{ds(t)}{dt}\right]
\end{equation}
First, it is necessary to rewrite the second part of the equation by using the following one:
\begin{equation}
\label{eq:dshatdt}
\frac{d\hat{s}(t)}{dt}=\lim_{\tau\rightarrow0} \frac{s(t) \odot \hat{f}(t') + \hat{i}(t') \odot \hat{ci}(t') - \hat{s}(t)}{\tau} 
\end{equation}
\begin{equation}
\label{eq:dsdt}
\frac{ds(t)}{dt}=\lim_{\tau\rightarrow0} \frac{s(t) \odot f(t')+ i(t') \odot ci(t') - s(t)}{\tau} 
\end{equation}
Therefore, the second part of \eqref{eq:dsigma2dt} can be rewritten by using the equations \eqref{eq:dshatdt} and \eqref{eq:dsdt} as follows:
\begin{equation}
\label{eq:ddeltasdt}
\frac{d\delta_s(t)}{dt}=\frac{1}{\tau}\left( s(t)\odot \delta_f(t')+\delta_{i,ci}(t')-\delta_s(t) \right),
\end{equation}
where $\delta_f(t') = \hat{f}(t') - f(t')$, $\delta_s(t)=\hat{s}(t) - s(t)$, $\delta_{i,ci}(t')=\hat{i}(t') \odot \hat{ci}(t')-i(t') \odot ci(t')$.

Considering the equation \eqref{eq:ddeltasdt}, the equation \eqref{eq:dsigma2dt} can be rewritten as follows:
\begin{equation}
\frac{d\sigma^2_s(t)}{dt}=\frac{2}{\tau}\delta_s(t)^T\left( s(t)\odot \delta_f(t')+\delta_{i,ci}(t')\right)-\frac{2}{\tau}\delta^2_s(t) 
\end{equation}
Due to the fact that $s(t) \in (-1;1)$, it is possible to apply Lemma 2 from \cite{Pachpatte1996} to get the following equation:
\begin{equation}
\frac{d\sigma^2_s(t)}{dt}\leq\frac{2}{\tau}\delta_s(t)^T\left(\delta_f(t')+\delta_{i,ci}(t')\right)-\frac{2}{\tau}\delta^2_s(t) 
\end{equation}
Applying the mean value theorem, we can find an upper bound of the first part of the equation as follows:
\begin{equation}
\| \delta_f(t') \| \leq \gamma_f \left(\|W_{fx}\|\sigma_x(t')+\|W_{fh}\|\sigma_h(t)\right), 
\end{equation}
where $\sigma_h(t)$ is the recurrent connection perturbation, $\gamma_f = \max_\xi \|\diag(\sigma'(\xi))\|$, $\xi = [net_1^*(t),...,net_n^*(t)]^T$ denotes a point, which is between $\min(\hat{net}^f_j(t),net^f_j(t))$ and $\max(\hat{net}^f_j(t),net^f_j(t))$, $j=1,...,n$; therefore $\gamma_f = \alpha$.

Applying Proposition \ref{prop:difference}, the upper bound of the second part squared can be assessed as
\begin{equation}
\| \delta_{i,ci}(t')\|^2 \leq \gamma_{i,ci}^2 (\sigma_{net^i}^2(t') + \sigma_{net^{ci}}^2(t')),
\end{equation}
where $\gamma_{i,ci}\leq\max_{\xi}(\|\nabla G(\xi)\|)\|^2)$, $G = \tanh(\xi_1)\sigma(\xi_2)$, $\xi=[\xi_, \xi_2]^T$; therefore, $\gamma_{i,ci} = \beta$. 

By applying a previously used inequality ($\sqrt{a_1+a_2}\leq \sqrt{a_1}+\sqrt{a_2} $), it is possible to write that
\begin{equation}
\| \delta_{i,ci}(t')\| \leq \gamma_{i,ci}( \sigma_x(t')(\|W_{ix}\|+\|W_{cix}\|) +\sigma_h(t)(\|W_{ih}\|+ \|W_{cih}\|))
\end{equation}
Therefore, the upper bound of \eqref{eq:dsigma2dt} is 
\begin{equation}
\frac{d\sigma^2_s(t)}{dt} \leq \frac{2}{\tau}\left(\sigma_s(t)\left( \gamma_x \sigma_x(t') + \gamma_h\sigma_h(t)\right) -\sigma_s^2(t)\right),
\end{equation}
where $\gamma_{x}=\gamma_f \|W_{fx}\| + \gamma_{i,ci}( \|W_{ix}\| + \|W_{cix}\|)$, $\gamma_{h}=\gamma_f \|W_{fh}\| + \gamma_{i,ci}(\|W_{ih}\| + \|W_{cih}\|)$

Applying Lemma 2 from \cite{Pachpatte1996}, the upper bound of $\frac{d\sigma_s(t)}{dt}$ can be calculated as follows
\begin{equation}
\label{eq:ubssigmadt}
\frac{d\sigma_s(t)}{dt} \leq \frac{1}{\tau}\left( \gamma_s - \sigma_s(t)\right)\leq \frac{1}{\tau}\gamma_s,
\end{equation}
where $\gamma_s=\gamma_{x}\sigma_x(t') + \gamma_{h}\sigma_h(t)$.

Assuming that $\sigma_x(t)$ and $\sigma_h(t)$ are either constants or change more slowly than $\sigma_{s}(t)$, one can get the following result for the interval $[t-\tau, t]$:
\begin{equation}
\sigma_s(t)\leq \gamma_s,
\end{equation}
which proves the proposition
\end{proof}

\subsection{The upper bound of the output perturbation}

Based on Proposition \ref{prop:MemDist}, it is possible to rewrite equation \eqref{eq:h_upperbound} as follows:
\begin{equation}
\sigma_h^2(t) \leq \rho_x^2(\theta)\sigma_x^2(t)+\rho_h^2(\theta)\sigma_h^2(t)
\end{equation}
where $\rho_x$ and $\rho_h$ are independent of the time variable $t$ and can be calculated based on the parameters of the model only:
\begin{equation}
\rho_x(\theta)=\sqrt{2}\biggl(\overline{\gamma}_x+\frac{\sqrt{2}(\beta\overline{\gamma}_x\|W_{oh}\| + \|W_{ox}\|)}{\exp(1-\beta \|W_{oh}\|)}\bigr),
\end{equation}
\begin{equation}
\rho_h(\theta)=\sqrt{2}\biggl(\overline{\gamma}_h+\frac{\beta \overline{\gamma}_h \|W_{oh}\|}{\exp(1-\beta \|W_{oh}\|)}\biggr),
\end{equation}

After some simplifications, it is possible to conclude that 
\begin{equation}
\label{eq:h_resupperbound}
\begin{split}
&\sigma_h^2(t) \leq \frac{\rho_x^2(\theta)}{1-\rho_h^2(\theta)}\sigma_x^2(t),\\
&\textit{s.t. } \rho_h^2(\theta) < 1; \space \beta \|W_{oh}\| \leq 1.
\end{split}
\end{equation}

\section{The Learning Procedure}
\label{sec:netlearning}
Considering \eqref{eq:h_resupperbound} and the upper bound of the output perturbation:
\begin{equation}
\label{eq:y_resupperbound}
\sigma_y^2(t) \leq \|W_{hy}\|^2 \frac{\rho_x^2(\theta)}{1-\rho_h^2(\theta)}\sigma_x^2(t),
\end{equation}
the regularizer $R(\theta)$ can be written as follows
\begin{equation}
\label{eq:ResObjFunc}
\begin{split}
&R(\theta) = \lambda_{S} \|W_{hy}\|^2 \frac{\rho_x^2(\theta)}{1-\rho_h^2(\theta)},\\
&\textit{s.t. } \rho_h^2(\theta) < 1; \space \gamma \|W_{oh}\| \leq 1.
\end{split}
\end{equation}
where $\lambda_{S}$ is a constant that measures the degree of the input perturbation.

It is possible to relax this problem to get the following objective function:
\begin{equation}
\label{eq:RelResObjFunc}
L(\theta) = \MSE(\theta) + R(\theta) + \lambda_1(\rho_h^2(\theta)-1)_+ + \lambda_2(\gamma \|W_{oh}\|-1)_+,
\end{equation}
where $(u)_+=\max(0,u)$ and $\lambda_{S}$, $\lambda_1$, and $\lambda_2$ are the parameters that must be assessed during the training procedure based on the model evaluation criterion.

The complex regularizer that is the right part of \eqref{eq:RelResObjFunc} has three parameters: $\lambda_S = \lambda_S(\sigma_\epsilon)$, which is the main parameter of the regularization, and $\lambda_1$, $\lambda_2$, which are used to maintain computation stability during the training.

\section{Conclusion}
\label{sec:conclus}
In this paper, the Tikhonov regularizer is derived for the LSTM unit by finding the upper bound of the output perturbation, which is the difference between the actual output of the network and the one that is observed if the noise is added to the inputs of the network. The regularizer has three parameters: the first one measures the degree of input perturbation, thus it controls the regularization process, the other two are used to maintain computation stability of the regularization. The regularizer can be used to approach the overfitting problem in LSTM networks by taking into account not only the weights of the gates independently, but also the interaction between them as parts of the LSTM complex structure. The mathematical justification of the proposed regularization derivation is provided, which enables to get regularizers for different architectures.

\bibliographystyle{plain}
\bibliography{arXiv2017}

\end{document}